\documentclass[runningheads,a4paper]{llncs}
\usepackage[T1]{fontenc}
\usepackage{graphicx}
\usepackage{booktabs}
\usepackage[misc]{ifsym}

\usepackage{amsmath,amssymb}
\usepackage{algorithm}
\usepackage{algpseudocode}
\newcommand{\norm}[1]{\lVert #1\rVert}
\newcommand{\Gam}{\Gamma}
\newcommand{\R}{\mathbb{R}}
\DeclareMathOperator*{\argmax}{arg\,max}

\newcommand{\LB}{\mathrm{LB}}
\newcommand{\UB}{\mathrm{UB}}
\newcommand{\rhocert}{\rho_{\mathrm{cert}}}
\newcommand{\rhoacq}{\rho_{\mathrm{acq}}}
\newcommand{\Cov}{\mathrm{Cov}}
\newcommand{\Risk}{\mathrm{Risk}}
\newcommand{\ind}{\mathbf{1}}
\usepackage{multirow}
\newcommand{\COVset}{\mathcal{C}}
\newtheorem{assumption}{Assumption}

\usepackage{subcaption} 

\begin{document}

\title{Selective Prediction from Agreement: A Lipschitz-Consistent Version Space Approach}

\titlerunning{Selective Prediction from Agreement}

\author{Mohamadsadegh Khosravani}
\authorrunning{M. Khosravani}
\institute{University of Regina}
\maketitle              

\begin{abstract}
We consider selective classification with abstention in the fixed-pool (or transductive) setting, where the unlabeled pool is given beforehand and only a subset of points can be queried for labels.

Our main insight is to view selective prediction through agreement: given queried labels and Lipschitz margin constraints in an embedding space, the version space of Lipschitz-consistent classification heads is well defined.

We obtain upper and lower Lipschitz margin bounds that define, for each pool point, a set of certified valid labels containing the prediction of every head in the version space. The model therefore predicts only when the label is forced (i.e., all consistent heads agree), and abstains otherwise.
We also propose a monotone submodular geometric proxy for budgeted querying, and show that a greedy algorithm retains the standard $(1-1/e)$ approximation factor.

\keywords{Selective prediction \and abstention \and Lipschitz certification \and transductive learning \and submodular acquisition.}
\end{abstract}

\section{Introduction}\label{sec:intro}

Selective classification studies the risk--coverage trade-off: a classifier may abstain on some inputs to reduce error on the rest~\cite{chow1970reject,elyaniv2010selective}. In modern deep learning, abstention is often implemented via confidence thresholding or by learning an explicit selection function, and evaluated through risk--coverage curves~\cite{geifman2017selective,geifman2019selectivenet}. In contrast, we treat abstention as lack of forced agreement: we abstain exactly when the queried labels and the assumptions we impose are insufficient to determine a unique label.

We focus on the \emph{fixed-pool} (transductive) setting, where the unlabeled pool is given in advance, only a subset can be queried for labels~\cite{kottke2022stopping}, and the goal is to label (or abstain on) the pool elements~\cite{chapelle2006ssl,goldwasser2020beyond,kalai2021towards,zhu2003harmonic,zhou2004localglobal}. Our aim is therefore \emph{certification}: identify which pool points have labels that are uniquely determined under the assumed constraints, and abstain otherwise.

\paragraph{Agreement-based selective prediction.}
After observing labels on a subset $S$, we predict at a pool point $u$ only if the label is \emph{forced} by labeled set S, and  constraints, meaning that \emph{every} hypothesis consistent with the constraints predicts the same class at $u$. Concretely, we work in an embedding space induced by a fixed representation $g$ and consider classification heads whose one-vs-rest margin functions are Lipschitz in this space. We take the Lipschitz constants used for certification to be computable upper bounds that can be controlled during head training (e.g., via norm or spectral constraints~\cite{bartlett2017spectral,miyato2018spectral,tsuzuku2018lmt}). Together with center-margin constraints at queried points, these assumptions define a Lipschitz-consistent version space $\mathcal F(S)$.

Our certificate is two-sided: labeled class-$c$ centers propagate a certified \emph{lower} envelope (positive evidence) for the margin $M_c$, while labeled non-$c$ centers propagate a certified \emph{upper} envelope (negative evidence) for $M_c$. From these envelopes we compute, for each pool point $u$, a certified feasible label set $\Gamma(u)$ that contains the prediction of every head in $\mathcal F(S)$. We therefore predict only when $\Gamma(u)$ is a singleton; otherwise we abstain. Beyond singleton feasibility, we can also force a label by separation: if one class’s certified lower envelope exceeds every other class’s certified upper envelope by a gap, the label is uniquely determined.

\paragraph{Closure and budgeted querying.}
We prove a closure property: adding forced pseudo-labels as anchors does not further shrink the Lipschitz-consistent version space. Under a finite labeling budget, we connect querying to certification by introducing a geometric, label-free proxy: subject to a mild margin-floor assumption, it reduces to a ball-union coverage objective in embedding space. This proxy is monotone submodular, so greedy selection achieves the classical $(1-1/e)$ approximation factor~\cite{nemhauser1978analysis}.

\paragraph{Agreement vs.\ correctness.}
Our guarantees certify agreement over $\mathcal F(S)$ on the fixed pool; conditions under which agreement implies correctness are discussed in the supplementary material.

\paragraph{Positioning with prior work.}
Our work differs from standard selective classification, which typically abstains based on confidence or a learned selection head~\cite{chow1970reject,elyaniv2010selective,geifman2017selective,geifman2019selectivenet}: we abstain because the label is not identifiable under explicit constraints. The agreement / version-space viewpoint is classical in learning theory and is related in spirit to disagreement-based ideas in active learning; we instantiate it on a fixed pool via an explicit Lipschitz-consistent class and computable two-sided margin envelopes. The resulting feasible label sets $\Gamma(u)$ are set-valued, but unlike conformal prediction~\cite{vovk2005algorithmic,romano2020classification,angelopoulos2021uncertaintysets}, they encode labels consistent with a constraint-defined hypothesis class on a fixed pool rather than distribution-free marginal coverage. Finally, while Lipschitz control is often used for robustness certification, we use it here to define the certifying hypothesis class and derive sound force-or-abstain rules~\cite{bartlett2017spectral,miyato2018spectral,tsuzuku2018lmt}.

\paragraph{Contributions.}
We view selective prediction as agreement over a Lipschitz-consistent version space (Definitions~\ref{def:forced} and~\ref{def:consistent}).
We give a sound force-or-abstain certificate using two-sided class-wise margin bounds and forcing rules (Sections~\ref{sec:cert} and~\ref{sec:setvalued}),
prove a closure property for adding forced pseudo-labels (Section~\ref{sec:practical}),
and propose a geometric proxy for budgeted querying that is monotone submodular, so greedy achieves the standard $(1-1/e)$ approximation guarantee (Sections~\ref{sec:coverage} and~\ref{sec:submod}).
In experiments on transductive CIFAR-10 and SVHN test pools, we obtain \emph{non-trivial} certified coverage with small label budgets and show that proxy-based acquisition can expand the certified region (Section~\ref{sec:experiments}).
Our guarantees certify agreement (identifiability) within the hypothesis class.

\section{Setting, Notation, and Objectives}\label{sec:setting}

Let $D=\{x_i\}_{i=1}^N \subset \mathcal X$ be an unlabeled pool with index set $[N]:=\{1,\dots,N\}$.
Querying $i\in[N]$ reveals an oracle label $y_i\in\{1,\dots,C\}$.
Let $S\subseteq[N]$ be the queried (labeled) indices and $k:=|S|$ the labeling budget.

We use a representation--head decomposition: a representation $g:\mathcal X\to\mathbb R^m$ and a classification head
$f:\mathbb R^m\to\mathbb R^C$ producing logits $f(z)=(f_1(z),\dots,f_C(z))$.
For pool point $i$, let $z_i:=g(x_i)$ and $\hat y(z):=\argmax_{c} f_c(z)$.

\begin{remark}[Certificates depend on the representation]
All certificates are in the embedding space of the current representation $g$.
If $g$ changes, then the embeddings $\{z_i\}$ and the Lipschitz bounds used for certification must be recomputed.
\end{remark}

\subsection{Selective Prediction Metrics on a Fixed Pool}\label{sec:metrics}
We use the standard selective-classification risk--coverage formalism (e.g., \cite{chow1970reject,elyaniv2010selective}).
\begin{definition}[Selective predictor; coverage and selective risk]\label{def:selective}
A selective predictor is a mapping $\pi:[N]\to \{1,\dots,C\}\cup\{\bot\}$.
Let $y^\star(u)$ denote the evaluation-only pool label.
Define coverage
\[
\Cov(\pi) := \frac{1}{N}\sum_{u\in[N]} \ind\{\pi(u)\neq \bot\},
\]
and selective risk (conditional error rate)
\[
\Risk(\pi) :=
\frac{\sum_{u\in[N]} \ind\{\pi(u)\neq \bot\}\ind\{\pi(u)\neq y^\star(u)\}}
{\sum_{u\in[N]} \ind\{\pi(u)\neq \bot\}}.
\]
\end{definition}
These quantities follow the standard selective prediction / reject-option setup \cite{chow1970reject,elyaniv2010selective}. The metrics above use $y^\star(\cdot)$ only for evaluation.

\begin{definition}[Forced label (agreement over a version space)]\label{def:forced}
Fix a labeled set $S$ and a hypothesis class $\mathcal F(S)$.
A class $c^\star$ is \emph{forced} at pool index $u$ if for every $\tilde f\in\mathcal F(S)$ and every
$\hat y_{\tilde f}(u)\in\argmax_c \tilde f_c(z_u)$ we have $\hat y_{\tilde f}(u)=c^\star$.
Otherwise we abstain.
\end{definition}

\subsection{Certified Coverage Objective}\label{sec:objective}
Given a labeled set $S$, our method outputs a selective predictor $\pi_S$ on the fixed pool.
The goal is simple: \emph{predict on as many pool points as possible}, while keeping the error rate on the predicted points low.
We measure this with coverage $\Cov(\pi_S)$ and selective risk $\Risk(\pi_S)$ (Definition~\ref{def:selective}).

We consider two tasks: (i) \textbf{certification}, where $S$ is given and we compute $\pi_S$; and
(ii) \textbf{budgeted acquisition}, where we choose $S$ under a labeling budget to improve the risk--coverage trade-off.



\section{Certificates for Forced Labels}\label{sec:certificates}

\subsection{Margins and the Lipschitz Head Assumption}\label{sec:margins}

In this section, we define one-vs-rest margins of the classification head in embedding space.

\begin{definition}[One-vs-rest margin]\label{def:ovr}
For class $c\in\{1,\dots,C\}$, define
\[
M_c(z) := f_c(z)-\max_{k\neq c} f_k(z).
\]

\end{definition}

\begin{assumption}[Lipschitz margins in embedding space]\label{ass:lipschitz}
For each class $c$, the margin function $M_c:\R^m\to\R$ is $L_{M,c}$-Lipschitz with respect to $\ell_2$:
\[
\bigl|M_c(z)-M_c(z')\bigr| \ \le\ L_{M,c}\,\norm{z-z'}\qquad \forall z,z'\in\R^m.
\]
One can take a shared constant $L_{M,c}\equiv L_M$ across classes.
\end{assumption}

\begin{remark}[Obtaining $L_{M,c}$]\label{rem:lipschitz_practice}
We require that the constants $\{L_{M,c}\}$ are \emph{upper bounds} on the true Lipschitz constants of the margins.
In practice we use computable bounds from the head parameters: for $f(z)=Wz+b$, one may take
$L_{M,c}=\max_{k\neq c}\|w_c-w_k\|_2$; for deeper heads we use standard spectral/norm-based upper bounds
\cite{bartlett2017spectral,miyato2018spectral,tsuzuku2018lmt} (details in Supplementary).
\end{remark}

Next, we define center-margin constraints at queried points and the resulting Lipschitz-consistent version space $F(S)$.

\subsection{Center Margins and Lipschitz-consistent Class}\label{sec:consistency}

Fix a queried labeled subset $S\subseteq [N]$ with oracle labels $\{y_i\}_{i\in S}$.
To use these labels for certification, we associate each labeled point with a certified lower bound on the margin of its labeled class.

\begin{definition}[Center margin lower bound]\label{def:center_margin}
For each labeled index $i\in S$ with label $y_i$, let $\underline m_i\in\R_{\ge 0}$ satisfy
\[
\underline m_i \ \le\ M_{y_i}(z_i).
\]
We call $\underline m_i$ the \emph{center margin lower bound} at $i$.
\end{definition}

The next step is to formalize the hypothesis class induced by the margin constraints.

\begin{definition}[Lipschitz-consistent head class]\label{def:consistent}
Fix $S$ and center margin lower bounds $\{\underline m_i\}_{i\in S}$.
Define $\mathcal{F}(S)$ as the set of heads $\tilde f:\R^m\to\R^C$ whose induced margins $\tilde M_c$ satisfy:
\begin{enumerate}
  \item \textbf{(Lipschitz)} each $\tilde M_c$ is $L_{M,c}$-Lipschitz as in Assumption~\ref{ass:lipschitz};
  \item \textbf{(Center constraints)} for all $i\in S$, $\tilde M_{y_i}(z_i)\ge \underline m_i$.
\end{enumerate}
\end{definition}

\begin{remark}[Non-emptiness]\label{rem:nonempty}
By construction of $\{\underline m_i\}$ and $\{L_{M,c}\}$, the trained head belongs to $\mathcal F(S)$; in particular, $\mathcal F(S)\neq\emptyset$.
\end{remark}

All guarantees are uniform over $\mathcal F(S)$: we output a label only when every $\tilde f\in\mathcal F(S)$ predicts the same class at $u$; otherwise we abstain.
\subsection{Two-Sided Certified Margin Propagation}\label{sec:cert}
In this section, we derive class-wise certified envelopes for the margins at each pool point. Concretely, for every class $c$ and pool index $u$, we construct bounds
\[
\LB_c(u)\ \le\ \tilde M_c(z_u)\ \le\ \UB_c(u)\qquad \forall \tilde f\in\mathcal{F}(S).
\]
Intuitively, $\LB_c$ propagates certified positive evidence for class $c$ from labeled class-$c$ centers, while $\UB_c$ propagates certified negative evidence against class $c$ from labeled non-$c$ centers. We define these envelopes next.

\begin{definition}[Lower envelope]\label{def:lb}
For any pool index $u$ and class $c$, define
\[
\LB_c(u):=
\sup_{i\in S:\ y_i=c}\Bigl(\underline m_i - L_{M,c}\norm{z_u-z_i}\Bigr),
\]
with the convention $\sup\emptyset=-\infty$.
\end{definition}

\begin{theorem}[Certified lower bound]\label{thm:lb}
For any pool index $u$ and class $c$, for all $\tilde f\in\mathcal{F}(S)$,
\[
\tilde M_c(z_u)\ \ge\ \LB_c(u).
\]
\end{theorem}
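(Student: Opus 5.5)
The plan is a direct pointwise argument: fix an arbitrary $\tilde f\in\mathcal F(S)$, a pool index $u$, and a class $c$. I will show that every term in the supremum defining $\LB_c(u)$ in Definition~\ref{def:lb} is a lower bound on $\tilde M_c(z_u)$. The supremum of a family of lower bounds is again a lower bound, so this gives the claim. The degenerate case with no labeled class-$c$ center is dispatched first. There the index set is empty, so by the convention $\sup\emptyset=-\infty$ we have $\LB_c(u)=-\infty$, and the inequality holds trivially because $\tilde M_c(z_u)$ is a real number.

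In the main case, take any $i\in S$ with $y_i=c$. Two facts are combined. First, the center constraint in Definition~\ref{def:consistent} gives $\tilde M_{y_i}(z_i)=\tilde M_c(z_i)\ge \underline m_i$. Second, the Lipschitz condition in Definition~\ref{def:consistent} (the same form as Assumption~\ref{ass:lipschitz}, with constant $L_{M,c}$ for class $c$) gives the one-sided bound
\[
\tilde M_c(z_u)\ \ge\ \tilde M_c(z_i)-L_{M,c}\norm{z_u-z_i}.
\]
Chaining these two facts yields $\tilde M_c(z_u)\ge \underline m_i-L_{M,c}\norm{z_u-z_i}$.

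Since this holds for every $i$ in the finite set $\{i\in S: y_i=c\}$, taking the supremum (here a maximum) over that set gives $\tilde M_c(z_u)\ge \LB_c(u)$. Finally, $\tilde f$ was arbitrary in $\mathcal F(S)$, so the bound is uniform over the version space.

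There is no real obstacle here. The only points needing care are the empty-set convention and using the class-specific constant $L_{M,c}$, which matches the class index of the margin being bounded. It is also worth noting that the argument uses only membership in $\mathcal F(S)$, not Remark~\ref{rem:nonempty}. If $\mathcal F(S)$ were empty, the statement would hold vacuously.
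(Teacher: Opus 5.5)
Your proposal is correct and is essentially the paper's proof: for each class-$c$ center you chain the center constraint $\tilde M_c(z_i)\ge \underline m_i$ with the Lipschitz bound, then take the supremum over such centers. Your explicit handling of the empty index set via $\sup\emptyset=-\infty$ is a small case the paper leaves implicit.
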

\begin{proof}
Fix any $i\in S$ with $y_i=c$ and any $\tilde f\in\mathcal{F}(S)$.
By Lipschitzness of $\tilde M_c$ and the center constraint at $i$,
\[
\tilde M_c(z_u)\ \ge\ \tilde M_c(z_i) - L_{M,c}\norm{z_u-z_i}
\ \ge\ \underline m_i - L_{M,c}\norm{z_u-z_i}.
\]
Taking the supremum over all such $i$ yields the claim.
\end{proof}

Building an upper bound for $M_c$ at an arbitrary pool point is slightly more delicate. The key observation is that at a labeled center $i\in S$, every non-target class has a negative margin upper bound:
Lemma~\ref{lem:center_ub} shows that for any $c\neq y_i$, $\tilde M_c(z_i)\le -\underline m_i$ for all $\tilde f\in\mathcal F(S)$.

\begin{lemma}[Non-target margins are upper bounded at labeled centers]\label{lem:center_ub}
Fix $i\in S$ with label $y_i$ and lower bound $\underline m_i$.
For any $\tilde f\in\mathcal{F}(S)$ and any $c\neq y_i$,
\[
\tilde M_c(z_i)\ \le\ -\underline m_i.
\]
\end{lemma}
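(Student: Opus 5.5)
The argument is purely algebraic and uses only the definition of the one-vs-rest margins (Definition~\ref{def:ovr}) together with the center constraint in Definition~\ref{def:consistent}; Lipschitzness plays no role here. Fix $i\in S$, any $\tilde f\in\mathcal F(S)$, and any class $c\neq y_i$. Write $a:=\tilde f_{y_i}(z_i)$ and $b:=\tilde f_c(z_i)$.

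The first step bounds $\tilde M_c(z_i)$ from above. Since $y_i\neq c$, the index $y_i$ is among the competitors in $\max_{k\neq c}\tilde f_k(z_i)$, so
\[
\tilde M_c(z_i)=b-\max_{k\neq c}\tilde f_k(z_i)\ \le\ b-a .
\]
The second step bounds the target margin from above. Symmetrically, $c$ is among the competitors in $\max_{k\neq y_i}\tilde f_k(z_i)$, so
\[
\tilde M_{y_i}(z_i)=a-\max_{k\neq y_i}\tilde f_k(z_i)\ \le\ a-b .
\]
Combining this with the center constraint $\tilde M_{y_i}(z_i)\ge\underline m_i$ gives $a-b\ge\underline m_i$, that is, $b-a\le-\underline m_i$. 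Chaining with the first step yields $\tilde M_c(z_i)\le b-a\le-\underline m_i$, as claimed.

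There is no real obstacle. The only point needing care is that both maxima range over sets that contain the other class, which holds because $c\neq y_i$ and $C\ge 2$; this is implicit whenever a non-target class exists. Nonnegativity of $\underline m_i$ is not needed for the inequality itself.
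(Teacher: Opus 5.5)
Your proposal is correct and follows essentially the same route as the paper: both combine the center constraint to get $\tilde f_c(z_i)\le \tilde f_{y_i}(z_i)-\underline m_i$ (you obtain it by noting that $c$ is among the competitors of $y_i$) with the fact that $y_i$ is among the competitors of $c$, so $\max_{k\neq c}\tilde f_k(z_i)\ge \tilde f_{y_i}(z_i)$. Your remarks that Lipschitzness and nonnegativity of $\underline m_i$ are not needed are also accurate.
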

\begin{proof}
Fix $\tilde f\in\mathcal F(S)$. The center constraint gives
\begin{equation*}
\begin{aligned}
\tilde f_{y_i}(z_i)-\max_{k\neq y_i}\tilde f_k(z_i) &\ge \underline m_i,\\
\max_{k\neq y_i}\tilde f_k(z_i) &\le \tilde f_{y_i}(z_i)-\underline m_i.
\end{aligned}
\end{equation*}
Hence for any $c\neq y_i$, $\tilde f_c(z_i)\le \tilde f_{y_i}(z_i)-\underline m_i$.
Also, since $y_i$ is among the competitors in $\max_{k\neq c}\tilde f_k(z_i)$, we have
$\max_{k\neq c}\tilde f_k(z_i)\ge \tilde f_{y_i}(z_i)$.
Combining,
\begin{equation*}
\begin{aligned}
\tilde M_c(z_i)
&=\tilde f_c(z_i)-\max_{k\neq c}\tilde f_k(z_i)\\
&\le \bigl(\tilde f_{y_i}(z_i)-\underline m_i\bigr)-\tilde f_{y_i}(z_i)
=-\underline m_i.
\end{aligned}
\end{equation*}
\end{proof}

We therefore define the following upper envelope; Theorem~\ref{thm:ub} shows it is a certified upper bound.
\begin{definition}[Upper envelope]\label{def:ub}
For any pool index $u$ and class $c$, define
\[
\UB_c(u):=
\inf_{i\in S:\ y_i\neq c}\Bigl(-\underline m_i + L_{M,c}\norm{z_u-z_i}\Bigr),
\]
with the convention $\inf\emptyset=+\infty$.
\end{definition}

\begin{theorem}[Certified upper bound]\label{thm:ub}
For any pool index $u$ and class $c$, for all $\tilde f\in\mathcal{F}(S)$,
\[
\tilde M_c(z_u)\ \le\ \UB_c(u).
\]
\end{theorem}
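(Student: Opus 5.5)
The proof mirrors that of Theorem~\ref{thm:lb}, with Lemma~\ref{lem:center_ub} supplying the anchor value at each center. Fix a pool index $u$, a class $c$, and a head $\tilde f\in\mathcal F(S)$. If no labeled index $i\in S$ has $y_i\neq c$, then $\UB_c(u)=+\infty$ by the convention $\inf\emptyset=+\infty$, and the claim holds trivially. So assume the index set $\{i\in S: y_i\neq c\}$ is nonempty.

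Fix any $i\in S$ with $y_i\neq c$. By the Lipschitz condition on $\tilde M_c$ in Definition~\ref{def:consistent} (with constant $L_{M,c}$), we have
\[
\tilde M_c(z_u)\ \le\ \tilde M_c(z_i)+L_{M,c}\norm{z_u-z_i}.
\]
Since $c\neq y_i$, Lemma~\ref{lem:center_ub} applies at the center $i$ and gives $\tilde M_c(z_i)\le -\underline m_i$. Substituting yields
\[
\tilde M_c(z_u)\ \le\ -\underline m_i+L_{M,c}\norm{z_u-z_i}.
\]

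This bound holds for every $i$ with $y_i\neq c$, and its left-hand side does not depend on $i$. Hence $\tilde M_c(z_u)$ is a lower bound for the whole family, and therefore it is at most the infimum, which is $\UB_c(u)$. Since $\tilde f$ was arbitrary, the bound holds uniformly over $\mathcal F(S)$.

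There is no real obstacle. The only delicate step, transferring the center constraint on $\tilde M_{y_i}$ to an upper bound on a \emph{different} class's margin $\tilde M_c$, is handled entirely by Lemma~\ref{lem:center_ub}. The Lipschitz step uses the constant $L_{M,c}$ of the class $c$ being bounded, not that of $y_i$. This matches Definition~\ref{def:ub}.
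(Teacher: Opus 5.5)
Your proof is correct and follows the paper's argument essentially verbatim. Like the paper, you handle the empty case via $\inf\emptyset=+\infty$, combine Lemma~\ref{lem:center_ub} with the $L_{M,c}$-Lipschitz bound on $\tilde M_c$ for each $i\in S$ with $y_i\neq c$, and then pass to the infimum.
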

\begin{proof}
If there is no $i\in S$ with $y_i\neq c$, then $\UB_c(u)=+\infty$ and the claim is trivial.
Otherwise fix any $i\in S$ with $y_i\neq c$ and any $\tilde f\in\mathcal{F}(S)$.
By Lemma~\ref{lem:center_ub}, $\tilde M_c(z_i)\le -\underline m_i$.
By Lipschitzness of $\tilde M_c$,
\[
\tilde M_c(z_u)\ \le\ \tilde M_c(z_i)+L_{M,c}\norm{z_u-z_i}
\ \le\ -\underline m_i+L_{M,c}\norm{z_u-z_i}.
\]
Taking the infimum over all such $i$ yields the claim.
\end{proof}
A direct consequence of Theorem~\ref{thm:ub} is that a negative upper envelope of a class shows that none of the members of the Lipschitz-consistent class predicts that class. 
\begin{corollary}[Certified class elimination]\label{cor:eliminate}
If $\UB_c(u)<0$, then no $\tilde f\in\mathcal{F}(S)$ can predict class $c$ at $u$.
\end{corollary}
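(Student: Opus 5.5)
The plan is to combine Theorem~\ref{thm:ub} with the elementary observation that a predicted class must have nonnegative one-vs-rest margin. We fix $u$ and $c$ with $\UB_c(u)<0$, and an arbitrary $\tilde f\in\mathcal F(S)$. By Theorem~\ref{thm:ub}, applied to this $\tilde f$, we get $\tilde M_c(z_u)\le \UB_c(u)<0$. So the whole argument reduces to showing that any class in $\argmax_k \tilde f_k(z_u)$ has nonnegative margin.

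For that step, we suppose $\hat y_{\tilde f}(u)=c$, meaning $c\in\argmax_k\tilde f_k(z_u)$. Then $\tilde f_c(z_u)\ge \tilde f_k(z_u)$ for every $k\neq c$. Hence $\max_{k\neq c}\tilde f_k(z_u)\le \tilde f_c(z_u)$, which gives $\tilde M_c(z_u)=\tilde f_c(z_u)-\max_{k\neq c}\tilde f_k(z_u)\ge 0$. (When $C\ge 2$ the max over $k\neq c$ is over a nonempty set, so this is well defined.) This contradicts $\tilde M_c(z_u)<0$. Therefore $c$ is not in the argmax for any $\tilde f\in\mathcal F(S)$, including under any tie-breaking choice as in Definition~\ref{def:forced}.

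There is no real obstacle here. The only point needing care is that the inequality must be strict: $\UB_c(u)<0$ gives $\tilde M_c(z_u)<0$, which excludes $c$ even when $c$ is tied for the maximum, since a tie still forces margin exactly $0$. Note also that $\UB_c(u)<0$ can only hold when some $i\in S$ has $y_i\neq c$, because otherwise $\UB_c(u)=+\infty$. So no separate empty-set case arises.
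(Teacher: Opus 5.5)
Your proof is correct and is essentially the paper's argument: a predicted class $c$ satisfies $\tilde f_c(z_u)\ge\max_{k\neq c}\tilde f_k(z_u)$, so $\tilde M_c(z_u)\ge 0$, which contradicts $\tilde M_c(z_u)\le \UB_c(u)<0$ from Theorem~\ref{thm:ub}. Your remarks on ties and on $\UB_c(u)<0$ requiring some $i\in S$ with $y_i\neq c$ are accurate side observations.
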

\begin{proof}
If some $\tilde f\in\mathcal F(S)$ predicts class $c$ at $u$, then necessarily
$\tilde f_c(z_u)\ge \max_{k\neq c}\tilde f_k(z_u)$, hence $\tilde M_c(z_u)\ge 0$.
But Theorem~\ref{thm:ub} gives $\tilde M_c(z_u)\le \UB_c(u)<0$, a contradiction.
\end{proof}

\subsection{Forcing rules}\label{sec:setvalued}
In the previous section, we developed the certificate components used to reason about labels on the pool, namely the class-wise upper and lower envelopes. In this section, we first define a feasible label set using the upper envelopes, which identifies the classes that cannot be eliminated at each pool point. We then use the lower envelopes, together with control parameters, to define stronger forcing rules that can certify labels on a larger portion of the pool.

\begin{definition}[Certified feasible label set]\label{def:gamma}
Define the certified feasible label set at pool index $u$ as
\[
\Gam(u)\ :=\ \bigl\{c\in\{1,\dots,C\}:\ \UB_c(u)\ge 0\bigr\}.
\]
\end{definition}
The following theorem implies $\Gam(u)$ contains the prediction of every Lipschitz-consistent head.
\begin{theorem}[Predicted label lies in the feasible set]\label{thm:pred_in_gamma}
Fix $u\in[N]$. For any $\tilde f\in\mathcal{F}(S)$ and any $\hat y_{\tilde f}(u)\in\argmax_k \tilde f_k(z_u)$,
\[
\hat y_{\tilde f}(u)\ \in\ \Gam(u).
\]
Consequently, if $\Gam(u)=\{c^\star\}$ is a singleton, then every $\tilde f\in\mathcal{F}(S)$ predicts $c^\star$ at $u$.
\end{theorem}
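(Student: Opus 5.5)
The argument is a contrapositive application of Theorem~\ref{thm:ub}, along the lines of Corollary~\ref{cor:eliminate}. Fix $u\in[N]$, a head $\tilde f\in\mathcal F(S)$, and a maximizer $c:=\hat y_{\tilde f}(u)\in\argmax_k \tilde f_k(z_u)$. By Definition~\ref{def:gamma}, it suffices to show $\UB_c(u)\ge 0$.

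The first step is to show that the margin of any argmax class is nonnegative. Since $c$ maximizes the logits at $z_u$, we have $\tilde f_c(z_u)\ge \tilde f_k(z_u)$ for every $k\neq c$, so $\tilde f_c(z_u)\ge \max_{k\neq c}\tilde f_k(z_u)$. By Definition~\ref{def:ovr}, this means $\tilde M_c(z_u)\ge 0$. Ties are harmless here: if several classes attain the maximum, each of them has margin exactly $0$, which still satisfies the inequality.

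The second step chains this with Theorem~\ref{thm:ub}, which holds for every member of $\mathcal F(S)$, and gives $0\le \tilde M_c(z_u)\le \UB_c(u)$. Hence $c\in\Gam(u)$. The same conclusion follows directly from Corollary~\ref{cor:eliminate}: if $\UB_c(u)<0$, no head in $\mathcal F(S)$ could predict $c$, yet $\tilde f$ does. The case $\UB_c(u)=+\infty$, when no labeled point has a label other than $c$, is covered automatically because $+\infty\ge 0$.

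For the ``consequently'' part, suppose $\Gam(u)=\{c^\star\}$. Then every $\tilde f\in\mathcal F(S)$ and every choice of argmax $\hat y_{\tilde f}(u)$ satisfy $\hat y_{\tilde f}(u)\in\{c^\star\}$, so every such prediction equals $c^\star$. This is exactly the notion of a forced label in Definition~\ref{def:forced}. I do not expect any real obstacle. The only point needing care is the tie-handling in the first step, which is why the argument works with an arbitrary element of the argmax rather than a unique prediction.
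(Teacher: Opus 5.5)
Your proof is correct and follows essentially the same route as the paper. In both, an argmax class has nonnegative one-vs-rest margin, Theorem~\ref{thm:ub} then gives $\UB_c(u)\ge 0$ and hence $c\in\Gam(u)$, and the singleton consequence is immediate.
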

\begin{proof}
Let $c^\star\in\argmax_k \tilde f_k(z_u)$. Then $\tilde M_{c^\star}(z_u)\ge 0$.
By Theorem~\ref{thm:ub}, $\tilde M_{c^\star}(z_u)\le \UB_{c^\star}(u)$, hence $\UB_{c^\star}(u)\ge 0$ and $c^\star\in\Gam(u)$.
If $\Gam(u)$ is a singleton, the second claim follows immediately.
\end{proof}

The feasible set keeps class $c$ at pool point $u$ only when $\UB_c(u)\ge 0$.
This is a strict elimination rule: any class with a negative certified upper bound is removed. To make the candidate set less aggressive, we introduce a parameter $\tau\ge 0$ and allow classes with slightly negative upper bounds to remain candidates. Larger $\tau$ means fewer classes are eliminated at this stage.

\begin{definition}[$\tau$-candidate label set]\label{def:gamma_tau}
For $\tau\ge 0$, define the relaxed candidate set
\[
\Gam_\tau(u)\ :=\ \bigl\{c\in\{1,\dots,C\}:\ \UB_c(u)\ge -\tau\bigr\}.
\]
\end{definition}

For each class $c$ at pool point $u$, the envelopes give a certified margin interval
$[\LB_c(u),\UB_c(u)]$ that holds for every head in $\mathcal F(S)$.
To force a label, it is not enough that a class could be the winner for \emph{some} consistent head; it must be the winner for \emph{all} of them.
Gap forcing enforces this by comparing the candidate class's lower bound to the other classes' upper bounds (with slack $\tau$), and by requiring a minimum amount of certified positive evidence via $\kappa$.
\begin{definition}[Certified forcing with a margin gap]\label{def:force_gap}
Fix $\tau\ge 0$ and a gap parameter $\kappa\ge 0$.
We \emph{force} a label at $u$ if there exists $c^\star$ such that
\[
\LB_{c^\star}(u)\ge \kappa
\quad\text{and}\quad
\LB_{c^\star}(u)>\max_{c\neq c^\star}\UB_c(u)+\tau.
\]
In that case output $\pi(u)=c^\star$; otherwise abstain.
\end{definition}

\begin{theorem}[Soundness of gap forcing]\label{thm:gap_sound}
If the condition in Definition~\ref{def:force_gap} holds, then every $\tilde f\in\mathcal{F}(S)$ predicts $c^\star$ at $u$.
\end{theorem}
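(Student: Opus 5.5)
The plan is to combine the two certified envelopes, Theorem~\ref{thm:lb} and Theorem~\ref{thm:ub}, and then turn the resulting statement about margins into a statement about the argmax. Fix $u$ and any $\tilde f\in\mathcal F(S)$. By Theorem~\ref{thm:lb}, $\tilde M_{c^\star}(z_u)\ge \LB_{c^\star}(u)$. By Theorem~\ref{thm:ub}, for every $c\neq c^\star$, $\tilde M_c(z_u)\le \UB_c(u)$. The gap condition, together with $\tau\ge 0$, then gives
\[
\tilde M_{c^\star}(z_u)\ \ge\ \LB_{c^\star}(u)\ >\ \max_{c\neq c^\star}\UB_c(u)+\tau\ \ge\ \max_{c\neq c^\star}\tilde M_c(z_u).
\]
The condition $\LB_{c^\star}(u)\ge\kappa\ge 0$ also guarantees $\tilde M_{c^\star}(z_u)\ge 0$. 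This is not enough by itself, however: it allows $\tilde M_{c^\star}(z_u)=0$, in which case $c^\star$ could tie with another class in the argmax.

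The key step is to show that no class $c\neq c^\star$ can belong to $\argmax_k \tilde f_k(z_u)$. I would argue by contradiction. Suppose some $c\neq c^\star$ attains the maximum. Then $\tilde M_c(z_u)\ge 0$, and since $c^\star$ is among $c$'s competitors, $\tilde f_c(z_u)\ge \tilde f_{c^\star}(z_u)$. Also $\max_{k\neq c^\star}\tilde f_k(z_u)=\tilde f_c(z_u)$, so
\[
\tilde M_{c^\star}(z_u)=\tilde f_{c^\star}(z_u)-\tilde f_c(z_u)\le 0 .
\]
Combining this with the chain above gives $0\ge \tilde M_{c^\star}(z_u)>\tilde M_c(z_u)\ge 0$, a contradiction. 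Hence every maximizer equals $c^\star$. The argmax is nonempty because $C$ is finite, so every $\tilde f$ predicts $c^\star$ at $u$, which is the forcing notion of Definition~\ref{def:forced}.

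The main subtlety is the tie-handling step: the argument must use the strict inequality in the gap condition rather than relying on $\kappa>0$, since $\kappa=0$ is allowed. An equivalent route to the same conclusion would be to show that the strict gap forces $\UB_c(u)<0$ for some competitor, but the direct comparison above is cleaner. Apart from this step, the proof only invokes the earlier envelope results.
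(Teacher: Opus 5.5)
Your proof is correct and follows the paper's argument: you use the same chain from Theorems~\ref{thm:lb} and~\ref{thm:ub} to get $\tilde M_{c^\star}(z_u)>\max_{c\neq c^\star}\tilde M_c(z_u)$, then show that every logit maximizer must be $c^\star$, and your contradiction step simply makes explicit what the paper compresses into ``$\hat y$ is also a maximizer of the margins.'' One caution on your closing aside, which your main argument does not use: with class-dependent $L_{M,c}$, the strict gap need not make any competitor's upper envelope negative, so that alternative route is not equivalent as stated; for example, $C=2$ with a single class-$c^\star$ center at distance $1$, $\underline m_i=10$, $L_{M,c^\star}=1$ and $L_{M,c}=10$ gives $\LB_{c^\star}(u)=9>0=\UB_c(u)$.
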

\begin{proof}
Fix any $\tilde f\in\mathcal F(S)$.
By Theorem~\ref{thm:lb}, $\tilde M_{c^\star}(z_u)\ge \LB_{c^\star}(u)$.
By Theorem~\ref{thm:ub}, for every $c\neq c^\star$ we have $\tilde M_c(z_u)\le \UB_c(u)$.
Thus,
\[
\tilde M_{c^\star}(z_u)
\ \ge\ \LB_{c^\star}(u)
\ >\ \max_{c\neq c^\star}\UB_c(u)+\tau
\ \ge\ \max_{c\neq c^\star}\tilde M_c(z_u).
\]
Hence $c^\star$ is the \emph{unique} maximizer of the margins $\{\tilde M_c(z_u)\}_{c=1}^C$.

Now let $\hat y\in\argmax_k \tilde f_k(z_u)$ be any logit maximizer of $\tilde f$ at $u$.
Then $\tilde M_{\hat y}(z_u)=\tilde f_{\hat y}(z_u)-\max_{k\neq \hat y}\tilde f_k(z_u)\ge 0$, so $\hat y$ is also a maximizer of the margins.
Since $c^\star$ is the unique margin maximizer, we must have $\hat y=c^\star$.
\end{proof}

In summary, we can force a label (i.e., certify agreement over $\mathcal F(S)$) in two ways.
\textbf{(i) Singleton forcing:} if the relaxed candidate set $\Gam_\tau(u)$ is a singleton, then all other classes are eliminated by a negative upper envelope.
\textbf{(ii) Gap forcing:} even when multiple classes remain, we can force $c^\star$ if it has sufficient certified positive evidence and is separated from all competitors:
\[
\LB_{c^\star}(u)\ge \kappa
\quad\text{and}\quad
\LB_{c^\star}(u)>\max_{c\neq c^\star}\UB_c(u)+\tau.
\]
The parameter $\tau$ controls how aggressively we eliminate classes via upper bounds, while $\kappa$ enforces a minimum amount of certified positive evidence for the forced class.

\section{Properties of the Forcing Rules}\label{sec:practical}

In this short section, we state a property of forcing rules ( introduced in ~\ref{sec:setvalued}): adding points with forced label to our initial labeled set does not change the Lipschitz-consistent version space. For this, we formally define the set of all forced points:

\begin{definition}[Forced pseudo-label set]\label{def:pseudolabel_set}
Fix thresholds $(\tau,\kappa)$ and a labeled set $S$.
Let $\pi_{S,\tau,\kappa}$ be the forcing rule from Section~\ref{sec:setvalued}.
Define the forced pseudo-label set
\[
P_{S,\tau,\kappa} \ :=\ \{u\in[N]: \pi_{S,\tau,\kappa}(u)\neq \bot\},
\]
and the induced pseudo-labels $\hat y(u):=\pi_{S,\tau,\kappa}(u)$ for $u\in P_{S,\tau,\kappa}$.
\end{definition}

\begin{theorem}[Version-space closure under forced pseudo-labels]\label{thm:closure}
Fix $(\tau,\kappa)$ and a labeled set $S$ with bounds $\{\underline m_i\}_{i\in S}$ and Lipschitz constants $\{L_{M,c}\}_{c=1}^C$.
Let $u\in P_{S,\tau,\kappa}$ with forced label $\hat y(u)=c$.
Define $S':=S\cup\{u\}$ with label $y_u:=c$ and center bound
\[
\underline m_u :=
\begin{cases}
0, & \text{if $u$ is forced via singleton forcing},\\
\kappa, & \text{if $u$ is forced via gap forcing}.
\end{cases}
\]
Then $\mathcal{F}(S')=\mathcal{F}(S)$.
\end{theorem}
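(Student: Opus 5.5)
We prove $\mathcal F(S')=\mathcal F(S)$ by double inclusion. Both classes use the same Lipschitz constants $\{L_{M,c}\}$, and $S'$ contains every center constraint of $S$ plus one more. Hence $\mathcal F(S')\subseteq\mathcal F(S)$ is immediate: any head satisfying the constraints for $S'$ satisfies those for $S$.

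For the reverse inclusion, fix $\tilde f\in\mathcal F(S)$. The Lipschitz conditions and the constraints at $i\in S$ already hold, so it remains to check the single new constraint $\tilde M_c(z_u)\ge \underline m_u$. We split into the two cases of the definition of $\underline m_u$.

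\emph{Gap forcing} ($\underline m_u=\kappa$). The forcing condition gives $\LB_c(u)\ge\kappa$. By Theorem~\ref{thm:lb}, $\tilde M_c(z_u)\ge \LB_c(u)\ge\kappa$, as required.

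\emph{Singleton forcing} ($\underline m_u=0$). We need $\tilde M_c(z_u)\ge 0$, i.e.\ that $c$ is a logit maximizer of $\tilde f$ at $z_u$. Take any $\hat y\in\argmax_k\tilde f_k(z_u)$. By Theorem~\ref{thm:pred_in_gamma}, $\hat y\in\Gam(u)$.

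This step is the main obstacle: the singleton rule in Section~\ref{sec:setvalued} is stated for the relaxed set $\Gam_\tau(u)$, not for $\Gam(u)$. Since $\tau\ge 0$, we have $\Gam(u)\subseteq\Gam_\tau(u)=\{c\}$. We also know $\Gam(u)\neq\emptyset$, because it contains $\hat y$. Therefore $\hat y=c$, so $c$ is a maximizer and $\tilde M_c(z_u)=\tilde f_c(z_u)-\max_{k\neq c}\tilde f_k(z_u)\ge0$.

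In both cases $\tilde f\in\mathcal F(S')$, which gives $\mathcal F(S)\subseteq\mathcal F(S')$ and completes the double inclusion. The new anchor adds a constraint that every consistent head already satisfies, which is why the version space does not shrink.
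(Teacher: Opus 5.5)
Your proof is correct and follows the paper's argument: the same double inclusion, the gap case via Theorem~\ref{thm:lb}, and the singleton case by showing that every logit maximizer at $z_u$ must be $c$. The only cosmetic difference is the citation: the paper eliminates each competitor $c'\neq c$ directly through Corollary~\ref{cor:eliminate} using $\UB_{c'}(u)<-\tau\le 0$, whereas you reach the same fact through Theorem~\ref{thm:pred_in_gamma} and the inclusion $\Gam(u)\subseteq\Gam_\tau(u)$.
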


\begin{proof}
By construction, $S'$ imposes all constraints of $S$ plus one additional center constraint at $u$.
Hence $\mathcal F(S')\subseteq \mathcal F(S)$.
It remains to show $\mathcal F(S)\subseteq \mathcal F(S')$.

Fix any $\tilde f\in\mathcal F(S)$. We must show $\tilde M_{c}(z_u)\ge \underline m_u$, where $c=\hat y(u)$.

\smallskip
\noindent\emph{Case 1: $u$ is forced via singleton forcing.}
By the singleton-forcing branch of the rule, $\Gam_\tau(u)=\{c\}$.
Equivalently, every competitor $c'\neq c$ satisfies $\UB_{c'}(u)<-\tau\le 0$, so by Corollary~\ref{cor:eliminate}
no $\tilde f\in\mathcal F(S)$ can predict any $c'\neq c$ at $u$.
Therefore any prediction at $u$ must be $c$, which implies $\tilde M_c(z_u)\ge 0=\underline m_u$.

\smallskip
\noindent\emph{Case 2: $u$ is forced via gap forcing.}
Gap forcing implies $\LB_c(u)\ge \kappa=\underline m_u$. By Theorem~\ref{thm:lb},
$\tilde M_c(z_u)\ge \LB_c(u)\ge \underline m_u$.

\smallskip
Thus, in both cases $\tilde f$ satisfies the added center constraint at $u$, so $\tilde f\in\mathcal F(S')$.
Therefore, $\mathcal F(S)\subseteq \mathcal F(S')$, and $\mathcal F(S')=\mathcal F(S)$.
\end{proof}

In other words, adding forced pseudo-labels as new centers does not strengthen the consistency constraints: the version space (and hence the semantics of the certificates) remains unchanged. This can be useful for iterative certified label expansion / pseudo-labeling.

\section{Budgeted Querying via a Submodular Proxy}\label{sec:budgeted}

We now consider budgeted acquisition: given a labeling budget $k$, which pool points should be queried to maximize the portion of the pool that becomes certifiably decidable (forced)? Since certified coverage depends on unknown labels and on head-dependent quantities learned after querying, we introduce a simple label-free geometric proxy motivated by the locality of certificate propagation, and optimize it with a greedy algorithm with a standard $(1-1/e)$ guarantee.

\subsection{Margin Floor and a Uniform Certified Radius}\label{sec:marginfloor}

This subsection extracts a geometric quantity from the certificate: a uniform radius around each labeled center within which the center's class has certified positive evidence (via the lower envelope).
This radius does \emph{not} by itself force a label, since elimination of competing classes requires upper-envelope conditions.
Its purpose is to connect certification to a ball-coverage proxy for budgeted querying.

To derive such a radius, we impose a mild margin-floor assumption on labeled centers.

\begin{assumption}[Center margin floor]\label{ass:marginfloor}
There exists $\gamma>0$ such that for all labeled centers $i\in S$, $\underline m_i \ge \gamma$.
\end{assumption}

Let
\[
L_{\max}:=\max_{c\in\{1,\dots,C\}} L_{M,c},
\qquad
\rhocert := \frac{\gamma}{L_{\max}}.
\]
Under this assumption, every labeled center certifies positive evidence inside a fixed radius $\rho_{\mathrm{cert}}$.
\begin{lemma}[Uniform positive-evidence propagation]\label{lem:uniform}
Under Assumption~\ref{ass:marginfloor}, for any labeled center $i\in S$ and any pool point $u\in[N]$ with
$\norm{z_u-z_i}<\rhocert$, we have $\LB_{y_i}(u)>0$.
\end{lemma}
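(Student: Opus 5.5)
The plan is to bound $\LB_{y_i}(u)$ from below by the single term of the supremum in Definition~\ref{def:lb} that comes from the center $i$ itself, and then use the margin floor together with $L_{\max}$. Write $c:=y_i$. Since $i\in S$ and $y_i=c$, the index $i$ is admissible in the supremum defining $\LB_c(u)$. In particular the set is nonempty, so the convention $\sup\emptyset=-\infty$ does not arise. This gives
\[
\LB_c(u)\ \ge\ \underline m_i - L_{M,c}\norm{z_u-z_i}.
\]

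Next I will bound the subtracted term. By definition of $L_{\max}$ we have $L_{M,c}\le L_{\max}$. Combined with the hypothesis $\norm{z_u-z_i}<\rhocert=\gamma/L_{\max}$, this yields
\[
L_{M,c}\norm{z_u-z_i}\ \le\ L_{\max}\norm{z_u-z_i}\ <\ \gamma .
\]
This uses $L_{\max}>0$, which is implicit in the definition of $\rhocert$. Assumption~\ref{ass:marginfloor} gives $\underline m_i\ge\gamma$, and therefore
\[
\LB_c(u)\ \ge\ \underline m_i-L_{M,c}\norm{z_u-z_i}\ >\ \gamma-\gamma=0 .
\]

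There is no real obstacle here; the argument is a direct calculation. The only points needing care are the ones noted above:
\begin{itemize}
\item ensuring $L_{\max}>0$ so that $\rhocert$ is well defined;
\item handling the degenerate case $L_{M,c}=0$, where the product term is $0<\gamma$ and the conclusion still holds;
\item getting the direction of the inequalities right so that the strict inequality survives.
\end{itemize}
If one wants the certified consequence stated explicitly, Theorem~\ref{thm:lb} then shows $\tilde M_{y_i}(z_u)>0$ for every $\tilde f\in\mathcal F(S)$.
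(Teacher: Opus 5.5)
Your proposal is correct and follows essentially the same route as the paper. You lower-bound $\LB_{y_i}(u)$ by the contribution of center $i$ itself, then use $\underline m_i\ge\gamma$ and $L_{M,y_i}\le L_{\max}$ together with $\norm{z_u-z_i}<\gamma/L_{\max}$ to obtain strict positivity.
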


\begin{proof}
Fix any labeled center $i\in S$. By Assumption~\ref{ass:marginfloor}, $\underline m_i\ge \gamma$.
If $\|z_u-z_i\|<\rhocert=\gamma/L_{\max}$, then
\[
\underline m_i - L_{M,y_i}\|z_u-z_i\|
\ \ge\
\gamma - L_{\max}\|z_u-z_i\|
\ >\ 0.
\]
Since $\LB_{y_i}(u)$ is the supremum over all class-$y_i$ centers (Definition~\ref{def:lb}),
it is at least the contribution of center $i$, hence $\LB_{y_i}(u)>0$.
\end{proof}

\subsection{Budgeted Subset Selection Objectives}\label{sec:coverage}

When labels are acquired under a finite budget, selecting which pool points to query is a one-shot subset selection problem.
In our setting, the downstream utility of a labeled set $S$ is measured by how much of the pool becomes \emph{certifiably}
decidable (forced) under the envelopes derived from $S$.

\subsubsection{Ideal (label-dependent) objectives}

Let $\pi_{S}$ denote the certificate-driven forcing rule induced by a labeled set $S$
(e.g., the rule in Section~\ref{sec:setvalued} with fixed $(\tau,\kappa)$).
A natural goal is to maximize certified coverage:
\[
\max_{S\subseteq[N],\ |S|=k}\ \Cov(\pi_{S}).
\]
More refined objectives include shrinking ambiguity via candidate-set sizes, e.g.
\[
\max_{S\subseteq[N],\ |S|=k}\ -\frac{1}{N}\sum_{u\in[N]}|\Gam_\tau^{(S)}(u)|,
\]
where $\Gam_\tau^{(S)}(u)$ emphasizes that candidate sets depend on which indices are labeled.

These objectives are generally \emph{label-dependent}: the envelopes depend on the labels revealed by querying $S$ and on head-dependent quantities
(margins and Lipschitz constants) learned from those labels. This makes direct optimization of the ideal objective intractable prior to querying.

\subsubsection{Proxy for Geometric Coverage}

While the above objectives are desirable, they involve the labels and number of certificates that are only accessible once the oracle is queried and the certification head is fitted. In order to derive a pre-query acquisition function, we can use only the part of the certification mechanism which is accessible based on the geometry of the unlabeled pool: \emph{locality of propagation} in the embedding space. As an intuition, lower-envelope certification spreads evidence from labeled centers to points in their vicinity, with a strength that decays with distance. While the labels are not yet accessible, and we cannot yet know \emph{which} class will propagate or whether upper-envelope elimination will eventually enforce a label, we can still prefer query sets whose neighborhoods collectively cover large parts of the pool. A natural label-free geometric proxy is thus given by unions of radius-$\rhoacq$ balls centered at the acquired points.

Let us fix some $\rhoacq>0$. Then the covered set is given by, with the objective to maximize:
\begin{equation*}
\begin{aligned}
\COVset_{\rhoacq}(S)
&:= \bigl\{u\in[N]: \exists\, i\in S \text{ such that } \norm{z_u-z_i}<\rhoacq \bigr\},\\
F_{\rhoacq}(S)
&:= \bigl|\COVset_{\rhoacq}(S)\bigr|.
\end{aligned}
\end{equation*}

Equivalently, $F_{\rhoacq}(S)$ counts how many pool points lie within distance $\rhoacq$ of at least one selected point.

Under the margin-floor assumption (Assumption~\ref{ass:marginfloor}), setting $\rhoacq=\rhocert$ gives a natural certificate-based interpretation of geometric coverage. Any pool point covered by the proxy lies within distance $\rhocert$ of at least one labeled center. By Lemma~\ref{lem:uniform}, this implies that the point receives \emph{certified positive evidence} for the label of that center (i.e., $\LB_{y_i}(u)>0$ for some $i\in S$). Thus, $\COVset_{\rhocert}(S)$ can be interpreted as the portion of the pool where the certificate guarantees local positive evidence for at least one class.

\paragraph{Scope of the proxy.}
The proxy $F_{\rhoacq}$ captures only the geometric aspect of certification. It does not account for (i) the (currently) unknown class labels of queried points, (ii) upper-envelope elimination and competition among classes, or (iii) other head-dependent effects. Consequently, $F_{\rhoacq}$ is generally not equal to certified coverage. Its purpose is instead to provide a simple pre-query surrogate that captures the locality structure of propagation and admits optimization guarantees. In the next section, we show that $F_{\rhoacq}$ is a monotone submodular function, so greedy optimization achieves the standard $(1-1/e)$ approximation guarantee.
\subsection{Submodularity and Greedy Approximation}\label{sec:submod}

We now show that the geometric proxy
\[
F_{\rhoacq}(S)=|\COVset_{\rhoacq}(S)|
\]
admits efficient optimization under a labeling budget. The key observation is that $F_{\rhoacq}$ is a standard coverage objective (the size of a union of neighborhoods), which is monotone and submodular. Therefore, under the cardinality constraint $|S|\le k$, the greedy algorithm achieves the classical $(1-1/e)$ approximation guarantee.

Algorithm~\ref{alg:greedy} implements exact greedy maximization of $F_{\rhoacq}$ by repeatedly selecting the point with the largest marginal increase in uncovered pool points.

\begin{algorithm}[t]
\caption{Greedy Coverage Subset Selection (budget $k$)}\label{alg:greedy}
\begin{algorithmic}[1]
\State \textbf{Input:} candidate indices $[N]$, embeddings $\{z_i\}_{i=1}^N$, radius $\rhoacq$, budget $k$
\State $S \gets \emptyset$
\State $\mathsf{Covered} \gets \emptyset$
\For{$t=1$ \textbf{to} $k$}
  \ForAll{$x \in [N] \setminus S$}
    \State $B_x \gets \{u\in[N]:\norm{z_u-z_x}<\rhoacq\}$
    \State $\Delta(x) \gets |B_x \setminus \mathsf{Covered}|$
  \EndFor
  \State Choose $x^\star \in \argmax_{x\in [N]\setminus S}\Delta(x)$
  \State $S \gets S \cup \{x^\star\}$
  \State $\mathsf{Covered} \gets \mathsf{Covered} \cup B_{x^\star}$
\EndFor
\State \textbf{Return:} $S$
\end{algorithmic}
\end{algorithm}

\begin{theorem}[Monotone submodularity and greedy approximation]\label{thm:greedy}
The set function $F_{\rhoacq}$ is nonnegative, monotone, and submodular. Let $S^\star$ be an optimal size-$k$ set maximizing $F_{\rhoacq}$, and let $S_k$ be the set returned by Algorithm~\ref{alg:greedy}. Then
\[
F_{\rhoacq}(S_k)\ \ge\ \left(1-\frac{1}{e}\right)F_{\rhoacq}(S^\star).
\]
\end{theorem}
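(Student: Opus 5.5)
Write $F_{\rhoacq}(S)=\bigl|\bigcup_{i\in S}B_i\bigr|$, where $B_i:=\{u\in[N]:\norm{z_u-z_i}<\rhoacq\}$. This shows that $F_{\rhoacq}$ is the cardinality of a union of fixed sets, i.e.\ a weighted coverage function. The plan is to verify the three structural properties directly from this representation, then show that Algorithm~\ref{alg:greedy} is exactly the standard greedy rule, and finally run the Nemhauser--Wolsey--Fisher argument~\cite{nemhauser1978analysis}.

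\textbf{Structural properties.} Nonnegativity holds because $F_{\rhoacq}$ is a cardinality, and $F_{\rhoacq}(\emptyset)=0$.

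\emph{Monotonicity.} If $A\subseteq B$, then $\COVset_{\rhoacq}(A)\subseteq\COVset_{\rhoacq}(B)$, so $F_{\rhoacq}(A)\le F_{\rhoacq}(B)$.

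\emph{Submodularity.} For $x\notin B$, the marginal gain is
\[
\Delta(x\mid S)=\bigl|B_x\setminus\COVset_{\rhoacq}(S)\bigr|.
\]
When $A\subseteq B$, the set $B_x\setminus\COVset_{\rhoacq}(B)$ is contained in $B_x\setminus\COVset_{\rhoacq}(A)$, which gives $\Delta(x\mid A)\ge\Delta(x\mid B)$. This is the diminishing-returns form of submodularity.

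\textbf{Greedy matches the standard rule.} In Algorithm~\ref{alg:greedy}, the variable $\mathsf{Covered}$ equals $\COVset_{\rhoacq}(S)$ at every iteration, by induction on $t$. Hence the quantity $\Delta(x)$ computed in the inner loop is exactly the marginal gain $\Delta(x\mid S)$, and the algorithm is the textbook greedy rule. One edge case needs a remark: if $k>N$, or if all gains vanish, greedy may add points with zero gain. This is harmless, because by monotonicity $F_{\rhoacq}(S^\star)$ is attained at a set of size at most $k$, and zero-gain additions do not affect the bound.

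\textbf{Approximation bound.} Let $S_t$ denote the greedy set after $t$ steps and $\mathrm{OPT}=F_{\rhoacq}(S^\star)$. By monotonicity,
\[
\mathrm{OPT}\le F_{\rhoacq}(S_t\cup S^\star).
\]
Adding the elements of $S^\star$ one at a time and applying submodularity to each step,
\[
F_{\rhoacq}(S_t\cup S^\star)\le F_{\rhoacq}(S_t)+\sum_{x\in S^\star}\Delta(x\mid S_t)\le F_{\rhoacq}(S_t)+k\,\bigl(F_{\rhoacq}(S_{t+1})-F_{\rhoacq}(S_t)\bigr),
\]
where the second inequality uses that each $\Delta(x\mid S_t)$ is at most the greedy gain at step $t+1$. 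Rearranging,
\[
\mathrm{OPT}-F_{\rhoacq}(S_{t+1})\le(1-1/k)\bigl(\mathrm{OPT}-F_{\rhoacq}(S_t)\bigr).
\]
Iterating $k$ times from $F_{\rhoacq}(S_0)=0$ gives
\[
\mathrm{OPT}-F_{\rhoacq}(S_k)\le(1-1/k)^k\,\mathrm{OPT}\le e^{-1}\,\mathrm{OPT},
\]
which is the claim.

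None of these steps is a genuine obstacle. The most delicate point is the bookkeeping: confirming that the algorithm's $\Delta$ coincides with the true marginal gain, and handling elements of $S^\star$ already in $S_t$, which contribute zero gain and so do not break the chain of inequalities. Alternatively, one could cite~\cite{nemhauser1978analysis} directly once monotone submodularity and $F_{\rhoacq}(\emptyset)=0$ are established.
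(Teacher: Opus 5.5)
Your proposal is correct and follows the same route as the paper's proof sketch. Both write $F_{\rhoacq}$ as the size of a union of the neighborhoods $B_i$, get monotonicity from the growth of that union, and get submodularity from the marginal gain $|B_x\setminus\COVset_{\rhoacq}(S)|$ shrinking as $S$ grows, then invoke Nemhauser--Wolsey--Fisher. The only difference is that you write out the standard recursion $\mathrm{OPT}-F_{\rhoacq}(S_{t+1})\le(1-1/k)\bigl(\mathrm{OPT}-F_{\rhoacq}(S_t)\bigr)$ and check that the algorithm's $\mathsf{Covered}$ set tracks $\COVset_{\rhoacq}(S)$; the paper leaves these details to the citation and the supplement.
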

\begin{proof}[Proof sketch]
For each candidate center $i\in[N]$, define its radius-$\rhoacq$ neighborhood
\[
B_i := \{u\in[N]: \norm{z_u-z_i}<\rhoacq\}.
\]
Then
\[
F_{\rhoacq}(S)=\left|\bigcup_{i\in S} B_i\right|,
\]
so $F_{\rhoacq}$ is a standard set-coverage function.

Monotonicity follows since adding centers can only enlarge the union. Submodularity follows by diminishing returns: for $A\subseteq B$ and $x\notin B$, the marginal gain of adding $x$ is the number of newly covered points in $B_x$, which can only decrease as the covered set grows. The greedy $(1-1/e)$ guarantee then follows from the classical Nemhauser--Wolsey--Fisher result for monotone submodular maximization under a cardinality constraint \cite{nemhauser1978analysis}. Full details are provided in the supplementary material.
\end{proof}

Our guarantees establish agreement on $\mathcal F(S)$ over the fixed pool. Under additional assumptions (e.g., class separation/purity in the embedding space), this agreement can also imply correctness (formal statements in the supplementary material).

\section{Experiments}
\label{sec:experiments}

We evaluate whether our certificate mechanism (i) yields meaningful forced coverage under small label budgets,
(ii) improves risk--coverage (RC) tradeoffs \emph{on the certifiable domain}, and (iii) benefits from budgeted acquisition via the geometric proxy
(Sections~\ref{sec:coverage}--\ref{sec:submod}).

\paragraph{Protocol (fixed-pool).}
We use a transductive setting where the unlabeled pool is the test split: \textsc{CIFAR-10}~\cite{krizhevsky2009cifar} ($N=10{,}000$) and \textsc{SVHN}~\cite{netzer2011reading} ($N=26{,}032$), both with $C=10$ classes.
We embed pool points with a ResNet-18 encoder~\cite{he2016deep} trained on the training split and use $\ell_2$-normalized penultimate-layer features ($d=512$).
Additional training and preprocessing details are in the supplementary material.

\paragraph{Budgeted acquisition.}
For budgets $b\in\{0.5\%,1\%,2\%,5\%\}$ (i.e., $k=\lceil bN\rceil$), we compare three pool-based acquisition strategies: greedy ball coverage (\texttt{greedy}), farthest-first $k$-center (\texttt{kcenter}) \cite{sener2018active}, and uniform random (\texttt{random}). Radius selection and graph-construction details are deferred to the supplementary material.

\paragraph{Certificates and baselines.}
Given labeled set $S$, we compute the envelopes $\LB_c(u),\UB_c(u)$ (Definitions~\ref{def:lb}--\ref{def:ub}) and apply \texttt{cert\_full}: singleton forcing from $\Gam_\tau(u)$ (Definition~\ref{def:gamma_tau}) and, otherwise, gap forcing (Definition~\ref{def:force_gap}), with $\kappa=0$ unless stated. We compare against post-hoc selective baselines on the same trained head: \texttt{softmax} thresholding, \texttt{margin} (top-1/top-2 logit gap) thresholding, APS conformal singletons (\texttt{conformal\_aps}), and a lightweight gating model (\texttt{selectivenet\_gate}). RC curves are obtained by sweeping each method's control parameter; implementation details are in the supplementary material.

\paragraph{Metrics.}
We report coverage and selective risk (Definition~\ref{def:selective}) and summarize RC curves via AURC. Because methods can attain different maximum coverages, we also report truncated AURC over $[0,\mathrm{cov}^{\max}_{\mathrm{cert}}]$, where $\mathrm{cov}^{\max}_{\mathrm{cert}}$ is the maximum coverage of \texttt{cert\_full} in the same (dataset, budget, acquisition) setting. Table~\ref{tab:covmax:main} reports these certified coverage ceilings, which define the truncation endpoint for matched-coverage comparisons. The values show strong dependence on acquisition geometry at low budgets (especially \texttt{greedy} vs.\ \texttt{kcenter}). We note that $\mathrm{cov}^{\max}_{\mathrm{cert}}$ need not be monotone in budget in our pipeline, since changing $S$ also changes the trained head, margin lower bounds, and Lipschitz constants used by the certificate. Additional quantitative results are provided in the supplementary material.

\begin{table}[t]
\centering
\caption{Maximum certified coverage $\mathrm{cov}^{\max}_{\mathrm{cert}}$ achieved by \texttt{cert\_full} across acquisitions and budgets (test pools). These values define the truncation endpoint for truncated AURC comparisons in the corresponding setting.}
\label{tab:covmax:main}
\scriptsize
\setlength{\tabcolsep}{3.8pt}
\renewcommand{\arraystretch}{0.96}
\begin{tabular}{llcccc}
\toprule
Dataset & Acquisition & 0.5\% & 1\% & 2\% & 5\% \\
\midrule
\multirow{3}{*}{CIFAR-10}
& \texttt{greedy}  & 0.8042 & 0.8250 & 0.8079 & 0.8527 \\
& \texttt{random}  & 0.5145 & 0.6922 & 0.7490 & 0.8534 \\
& \texttt{kcenter} & 0.1385 & 0.2130 & 0.2968 & 0.4213 \\
\midrule
\multirow{3}{*}{SVHN}
& \texttt{greedy}  & 0.7787 & 0.7207 & 0.6818 & 0.6076 \\
& \texttt{random}  & 0.6820 & 0.6263 & 0.7792 & 0.8437 \\
& \texttt{kcenter} & 0.3356 & 0.3972 & 0.5676 & 0.5795 \\
\bottomrule
\end{tabular}
\end{table}

\paragraph{Key findings.}
(1) \textbf{Competitive RC tradeoffs on the certified domain.}
On both datasets, \texttt{cert\_full} is competitive with (and often improves upon) post-hoc thresholding baselines when comparisons are restricted to the
certifiable domain $[0,\mathrm{cov}^{\max}_{\mathrm{cert}}]$.
(2) \textbf{Certified coverage is acquisition-sensitive.}
Greedy ball coverage frequently yields substantially larger certified domains than \texttt{kcenter} and \texttt{random} at the same budget (Table~\ref{tab:covmax:main}), supporting the geometric proxy motivation.
(3) \textbf{Conservative-by-design coverage ceiling.}
Because our method certifies \emph{agreement} over a Lipschitz-consistent version space, it is intentionally conservative: in several settings, \texttt{cert\_full} attains low selective risk on the certified domain but does not extend to the largest coverages reached by post-hoc thresholding methods. We view this as a semantic tradeoff (agreement-based reliability vs.\ attainable coverage), and Figure~\ref{fig:cifar10:rc_b002} together with the supplementary results makes this behavior explicit.

Figure~\ref{fig:cifar10:rc_b002} illustrates representative RC frontiers on CIFAR-10 at budget $b=2\%$ for the three acquisition strategies.
\begin{figure}[t]
\centering
\begin{subfigure}[t]{0.32\linewidth}
  \centering
  \includegraphics[width=\linewidth]{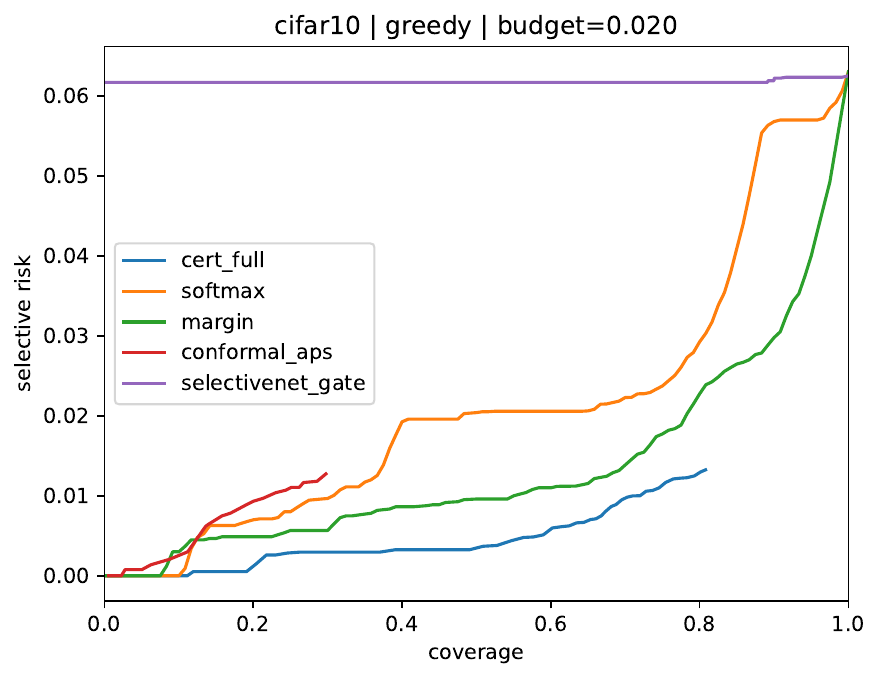}
  \caption{\texttt{greedy}}
\end{subfigure}
\begin{subfigure}[t]{0.32\linewidth}
  \centering
  \includegraphics[width=\linewidth]{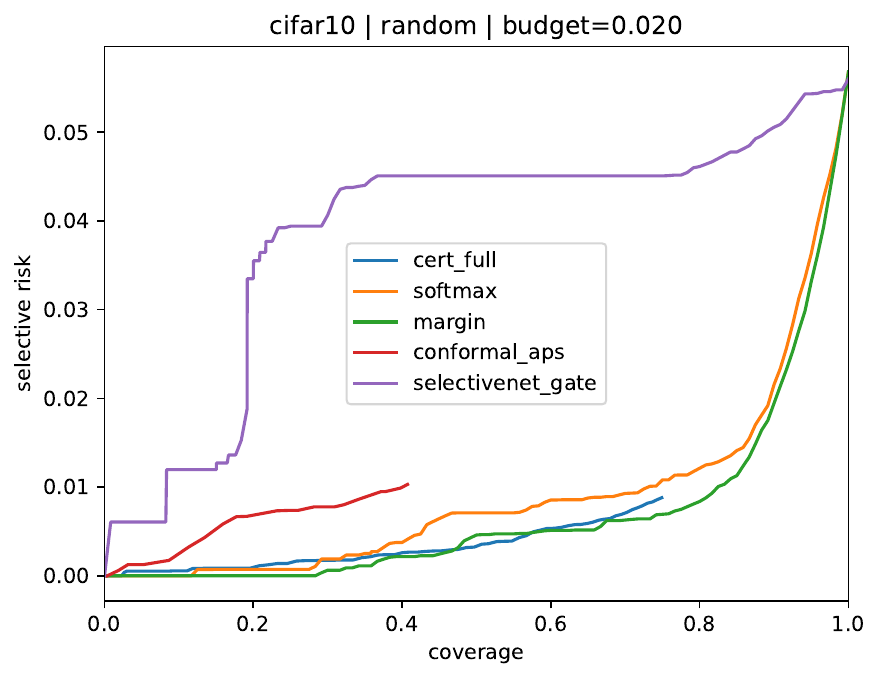}
  \caption{\texttt{random}}
\end{subfigure}
\begin{subfigure}[t]{0.32\linewidth}
  \centering
  \includegraphics[width=\linewidth]{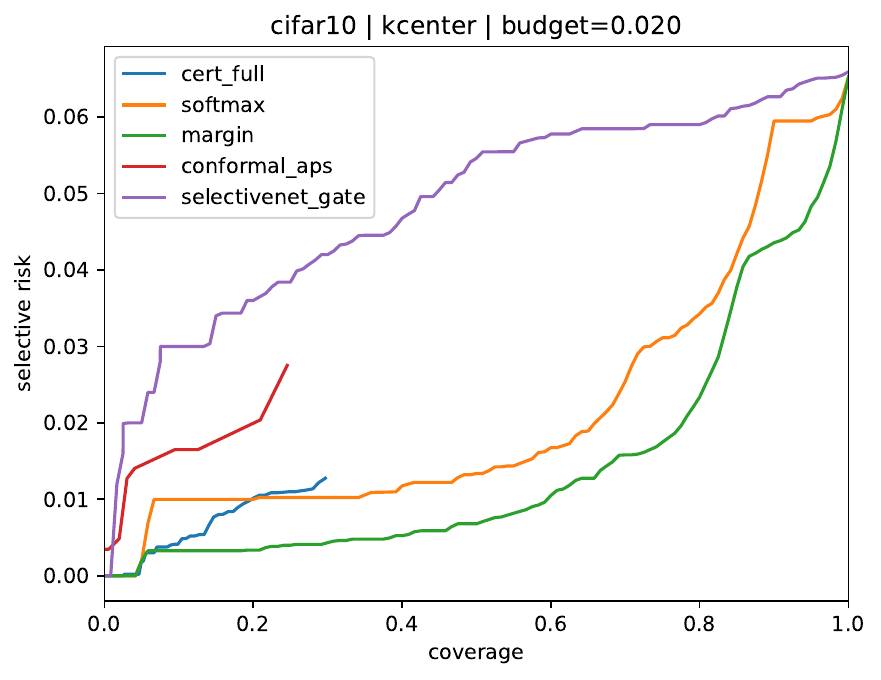}
  \caption{\texttt{kcenter}}
\end{subfigure}
\caption{CIFAR-10 RC frontiers at budget $b=2\%$ ($k=200$). Comparisons are most meaningful on the certified domain $[0,\mathrm{cov}^{\max}_{\mathrm{cert}}]$. Additional plots and quantitative summaries are in the supplementary material.}
\label{fig:cifar10:rc_b002}
\end{figure}

\section{Conclusion}\label{sec:conclusion}
We introduced a certification-based view of selective prediction in the fixed-pool setting, where abstention is treated as lack of forced agreement rather than confidence thresholding. We predict only when observed labels and Lipschitz margin constraints force agreement over a Lipschitz-consistent version space of heads.

We derived two-sided Lipschitz margin envelopes leading to certified feasible label sets and sound force-or-abstain rules, proved a closure property for reusing forced pseudo-labels without changing the version-space semantics, and proposed a geometric proxy for budgeted querying with a monotone submodular objective and a greedy $(1-1/e)$ guarantee.

Experiments show non-trivial certified coverage and that proxy-based acquisition can substantially expand the certified region under a fixed labeling budget. A practical limitation is conservativeness: the method often attains low selective risk on the certified domain but may saturate at lower maximum coverage than less constrained post-hoc abstention rules.

Future work includes weaker and more realistic assumptions linking agreement to correctness, and relaxed (e.g., sampled) variants of forcing that increase coverage while retaining interpretability.

\section*{Generative AI use disclosure}
We used ChatGPT to assist with writing and presentation (grammar/style edits, rephrasing, and improving the wording of definitions and theorem statements).
For Section~\ref{sec:submod}, we also used it to obtain a proof hint/outline and to refine the exposition.
All technical content and claims, including proof correctness, were verified by the authors, who take full responsibility for the manuscript and any copyright/IP issues.


\begin{thebibliography}{18}



\bibitem{chow1970reject}
Chow, C.K.: On Optimum Recognition Error and Reject Tradeoff. IEEE Transactions on Information Theory \textbf{16}(1), 41--46 (1970). \doi{10.1109/TIT.1970.1054406}

\bibitem{elyaniv2010selective}
El-Yaniv, R., Wiener, Y.: On the Foundations of Noise-Free Selective Classification. Journal of Machine Learning Research \textbf{11}, 1605--1641 (2010)

\bibitem{geifman2019selectivenet}
Geifman, Y., El-Yaniv, R.: SelectiveNet: A Deep Neural Network with an Integrated Reject Option. In: Proceedings of the 36th International Conference on Machine Learning. Proceedings of Machine Learning Research, vol.~97, pp.~2151--2159. PMLR (2019)


\bibitem{nemhauser1978analysis}
Nemhauser, G.L., Wolsey, L.A., Fisher, M.L.: An Analysis of Approximations for Maximizing Submodular Set Functions---I. Mathematical Programming \textbf{14}(1), 265--294 (1978). \doi{10.1007/BF01588971}






\bibitem{sener2018active}
Sener, O., Savarese, S.: Active Learning for Convolutional Neural Networks: A Core-Set Approach. In: International Conference on Learning Representations (2018). arXiv:1708.00489


\bibitem{bartlett2017spectral}
Bartlett, P.L., Foster, D.J., Telgarsky, M.: Spectrally-Normalized Margin Bounds for Neural Networks. In: Advances in Neural Information Processing Systems 30, pp.~6240--6249. Curran Associates, Inc. (2017)

\bibitem{miyato2018spectral}
Miyato, T., Kataoka, T., Koyama, M., Yoshida, Y.: Spectral Normalization for Generative Adversarial Networks. In: International Conference on Learning Representations (2018). arXiv:1802.05957

\bibitem{tsuzuku2018lmt}
Tsuzuku, Y., Sato, I., Sugiyama, M.: Lipschitz-Margin Training: Scalable Certification of Perturbation Invariance for Deep Neural Networks. In: Advances in Neural Information Processing Systems (2018). arXiv:1802.04034

\bibitem{krizhevsky2009cifar}
Krizhevsky, A.: Learning Multiple Layers of Features from Tiny Images. Technical Report, University of Toronto (2009)

\bibitem{zhu2003harmonic}
Zhu, X., Ghahramani, Z., Lafferty, J.: Semi-supervised Learning Using Gaussian Fields and Harmonic Functions. In: Proceedings of the 20th International Conference on Machine Learning, pp.~912--919 (2003)

\bibitem{zhou2004localglobal}
Zhou, D., Bousquet, O., Lal, T.N., Weston, J., Sch{\"o}lkopf, B.: Learning with Local and Global Consistency. In: Advances in Neural Information Processing Systems 16, pp.~321--328 (2004)


\bibitem{chapelle2006ssl}
Chapelle, O., Sch{\"o}lkopf, B., Zien, A. (eds.): Semi-Supervised Learning. MIT Press (2006)




\bibitem{vovk2005algorithmic}
Vovk, V., Gammerman, A., Shafer, G.: Algorithmic Learning in a Random World. Springer (2005)

\bibitem{romano2020classification}
Romano, Y., Sesia, M., Cand{\`e}s, E.J.: Classification with Valid and Adaptive Coverage. In: Advances in Neural Information Processing Systems (2020)

\bibitem{angelopoulos2021uncertaintysets}
Angelopoulos, A.N., Bates, S., Malik, J., Jordan, M.I.: Uncertainty Sets for Image Classifiers using Conformal Prediction. In: International Conference on Learning Representations (2021). arXiv:2009.14193


\bibitem{geifman2017selective}
Geifman, Y., El-Yaniv, R.: Selective Classification for Deep Neural Networks. In: Advances in Neural Information Processing Systems, vol.~30 (2017)


\bibitem{netzer2011reading}
Netzer, Y., Wang, T., Coates, A., Bissacco, A., Wu, B., Ng, A.Y.: Reading Digits in Natural Images with Unsupervised Feature Learning. In: NIPS Workshop on Deep Learning and Unsupervised Feature Learning, vol.~2011(2), p.~5 (2011)







\bibitem{he2016deep}
He, K., Zhang, X., Ren, S., Sun, J.: Deep Residual Learning for Image Recognition.
In: Proceedings of the IEEE Conference on Computer Vision and Pattern Recognition (CVPR),
pp.~770--778 (2016)

\bibitem{kottke2022stopping}
Kottke, D., Sandrock, C., Krempl, G., Sick, B.:
A Stopping Criterion for Transductive Active Learning.
In: Machine Learning and Knowledge Discovery in Databases (ECML PKDD 2022), pp.~468--484. Springer (2022).
\doi{10.1007/978-3-031-26412-2_29}

\bibitem{goldwasser2020beyond}
Goldwasser, S., Kalai, A.T., Kalai, Y.T., Montasser, O.:
Beyond Perturbations: Learning Guarantees with Arbitrary Adversarial Test Examples.
In: Advances in Neural Information Processing Systems (NeurIPS) 33 (2020)

\bibitem{kalai2021towards}
Kalai, A.T., Kanade, V.:
Towards Optimally Abstaining from Prediction with OOD Test Examples.
In: Advances in Neural Information Processing Systems (NeurIPS) 34, pp.~12774--12785 (2021)
\end{thebibliography}
\end{document}